\documentclass{article}

\usepackage{microtype}
\usepackage{graphicx}
\usepackage{subcaption}
\usepackage{booktabs} 

\usepackage{hyperref}

\usepackage[preprint]{icml2026}

\usepackage{tabularx}
\usepackage{amsmath}
\usepackage{amssymb}
\usepackage{mathtools}
\usepackage{amsthm}

\usepackage[capitalize,noabbrev]{cleveref}

\theoremstyle{plain}
\newtheorem{theorem}{Theorem}[section]
\newtheorem{proposition}[theorem]{Proposition}

\theoremstyle{definition}

\theoremstyle{remark}
\newtheorem{remark}[theorem]{Remark}

\usepackage[textsize=tiny]{todonotes}

\icmltitlerunning{Environment-Conditioned Tail Reweighting for Total Variation Invariant Risk Minimization}

\begin{document}

\twocolumn[
  \icmltitle{Environment-Conditioned Tail Reweighting \\ for Total Variation Invariant Risk Minimization}



  \icmlsetsymbol{equal}{*}

  \begin{icmlauthorlist}
    \icmlauthor{Yuanchao Wang}{Jnu,Duke}
    \icmlauthor{Zhao-Rong Lai}{Jnu}
    \icmlauthor{Tianqi Zhong}{Beihang}
    \icmlauthor{Fengnan Li}{Duke}
  \end{icmlauthorlist}


  \icmlaffiliation{Duke}{Duke University, NC, USA}
  \icmlaffiliation{Jnu}{Jinan University, Guangzhou, China}
  \icmlaffiliation{Beihang}{Beihang University, Beijing, China}

  \icmlcorrespondingauthor{Zhao-Rong Lai}{laizhr@jnu.edu.cn}


  \vskip 0.3in
]



\printAffiliationsAndNotice{}  

\begin{abstract}
Out-of-distribution (OOD) generalization remains challenging when models simultaneously encounter correlation shifts across environments and diversity shifts driven by rare or hard samples. Existing invariant risk minimization (IRM) methods primarily address spurious correlations at the environment level, but often overlook sample-level heterogeneity within environments, which can critically impact OOD performance. In this work, we propose \emph{Environment-Conditioned Tail Reweighting for Total Variation Invariant Risk Minimization} (ECTR), a unified framework that augments TV-based invariant learning with environment-conditioned tail reweighting to jointly address both types of distribution shift. By integrating environment-level invariance with within-environment robustness, the proposed approach makes these two mechanisms complementary under mixed distribution shifts. We further extend the framework to scenarios without explicit environment annotations by inferring latent environments through a minimax formulation. Experiments across regression, tabular, time-series, and image classification benchmarks under mixed distribution shifts demonstrate consistent improvements in both worst-environment and average OOD performance.
\end{abstract}

\section{Introduction}

Modern machine learning systems are increasingly deployed in settings where the i.i.d.\ assumption is fragile, and performance can degrade sharply once the test distribution differs from training \citep{hendrycks2019robustness,liu2021hrm,arjovsky2019irm,sagawa2020groupdro,duchi2021uniformdro}.
A useful characterization views real-world out-of-distribution (OOD) generalization through two largely independent axes \citep{ye2022oodbench}:
\textbf{diversity shift}, where test-time support contains novel variations absent from training,
and \textbf{correlation shift}, where spurious features change their label-conditional relationships across environments.
Many benchmarks and algorithms are dominated by one axis, leading to methods that can be strong under one regime yet brittle under the other.

A prominent line of work for correlation shift is \textbf{invariant learning}, notably invariant risk minimization (IRM) \citep{arjovsky2019irm,ahuja2020irmgames}, which aims to learn representations that admit a single predictor across multiple environments.
Recent total-variation (TV)--based formulations interpret IRM as enforcing \emph{stationarity} of risk with respect to a dummy classifier parameter, providing a variational view of invariance \citep{lai2024tvirm,wang2025oodtvirm}.
Despite their appeal, existing invariant learning objectives are typically enforced on \emph{environment-aggregated} risks and depend critically on environment partitions that may be missing, noisy, or inferred imperfectly \citep{liu2021hrm,creager2021eiil,lin2022zin,zhang2023missingirm,tan2023TIVAtiva}.
As a result, when within-environment heterogeneity is substantial, minority modes and hard examples can be systematically diluted even if invariance is enforced across environments \citep{ye2022oodbench,zhang2023missingirm}.

In parallel, \textbf{sample reweighting} and \textbf{stable learning} reshape the effective empirical measure so that hard/rare samples dominate optimization \citep{pengcui2020stablereweightSRDO,nam2020LFF,liu2021jtt,pengcui2025yuhanSampleweight}, closely related to distributionally robust optimization (DRO) perspectives that adversarially tilt the empirical distribution \citep{rahimian2019droreview,duchi2021uniformdro}.
However, reweighting-based robustness can be fragile in over-parameterized models and often requires explicit control of weight concentration to avoid degeneracy.
More fundamentally, most reweighting/DRO approaches act on the \emph{main risk} independently of environment-level invariance objectives, limiting their ability to deliver \emph{consistent} robustness across correlation- and diversity-shift regimes.

This paper redistributes modeling responsibility between \emph{environment-level invariance} and \emph{within-environment tail robustness}.
Our two central ideas are:

\paragraph{Tail reweighting }
We learn a sample-weighting adversary that emphasizes hard/rare (tail) samples by assigning larger probability mass to high-loss examples \citep{rahimian2019droreview,duchi2021uniformdro}.

\paragraph{Environment-conditioned }
We \emph{condition} the tail reweighting \emph{within each environment} so that tail emphasis does not implicitly reweight environments.
Concretely, the adversary produces a batch-level distribution $\pi_\theta(i)=\mathrm{softmax}(s_i)$, which is converted into an environment-conditioned distribution
\begin{equation}
\pi_\theta(i\mid e)=\frac{\pi_\theta(i)\,m_{i,e}}{\sum_j \pi_\theta(j)\,m_{j,e}},
\label{eq:intro_env_conditioned_weights}
\end{equation}
where $m_{i,e}$ is either a hard assignment (known environments) or a soft assignment (inferred environments).
We then compute the \emph{environment-conditioned tail risk} and enforce TV/IRM stationarity on the same weighted risks:
hard samples are not averaged away within each environment, while each environment contributes comparably through environment-wise normalization.

Building on this insight, we introduce \textbf{Environment-Conditioned Tail Reweighting for Total-Variation Invariant Risk Minimization} (ECTR), a minimax framework with a predictor/representation $\Phi$ (min), a tail-weighting adversary $\theta$ (max), and an invariance adversary $\Psi$ (max) that adapts the TV-IRM penalty strength \citep{arjovsky2019irm,ahuja2020irmgames,lai2024tvirm,wang2025oodtvirm}.
To prevent degenerate weight concentration, we regularize $\theta$ using an \emph{environment-wise} KL-to-uniform term.
We further extend the framework to settings without environment annotations by learning soft environment assignments via an additional adversary \citep{liu2021hrm,creager2021eiil,lin2022zin,tan2023TIVAtiva}.
Empirically, we evaluate on representative benchmarks spanning both correlation-shift and diversity-shift regimes (including common DG splits \citep{dou2019DGSemanticFeatures,zhou2023dgsurvey,zhang2023nicoDG} and spurious-correlation settings \citep{pengcui2023covariaterandomweight}), and show that ECTR yields stable improvements across these regimes.

Following the taxonomy of \citet{liu2021oodSurvey}, our method is primarily an \textbf{IRM/invariant learning} approach under \textbf{Supervised Model Learning}, with a complementary interpretation as \textbf{optimization-based robustness} via environment-conditioned tail reweighting (see Fig.~\ref{fig:suanfa})..

Our main contributions are as follows:
\begin{itemize}
\vspace{-3mm}
  \item \emph{Environment-conditioned tail reweighting for TV-IRM:} both the supervised risk and the TV-based invariance penalty are evaluated on environment-conditioned, sample-weighted losses.
  \vspace{-3mm}
  \item \emph{Stable adversarial reweighting via environment-wise KL:} an explicit control of weight concentration that enables reliable tail emphasis without collapsing within any environment.
  \vspace{-3mm}
  \item \emph{A unified framework for known and inferred environments:} consistent training semantics and empirical gains across benchmarks that stress either correlation shift or diversity shift.
\end{itemize}
\vspace{-2mm}

\begin{figure}[t]
    \centering
    \includegraphics[width=0.65\linewidth]{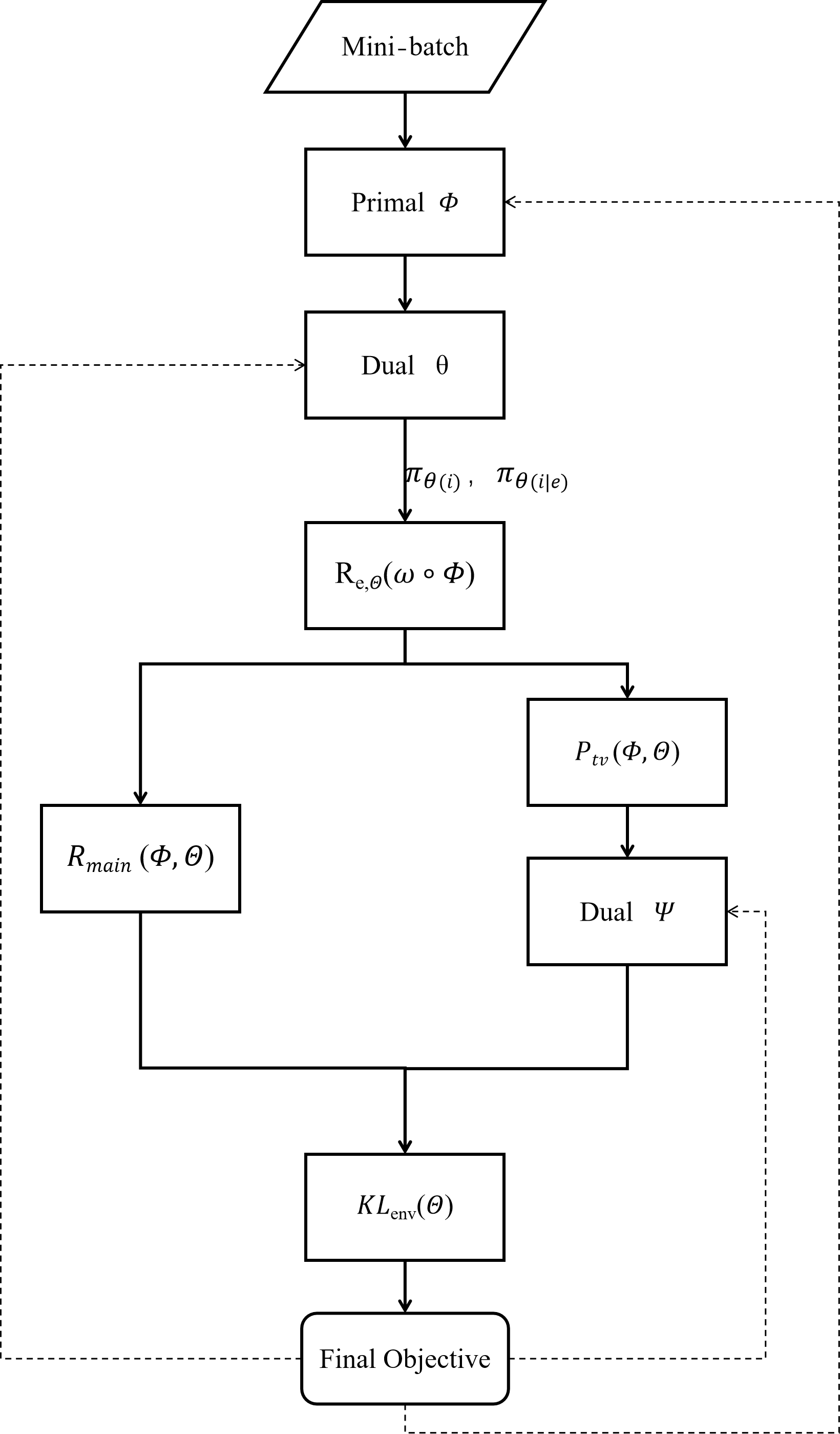}
    \caption{\textbf{ECTR overview.} }
    \label{fig:suanfa}
    \vspace{-2mm}
\end{figure}

\section{Related Work}

\subsection{Method categorization and our positioning.}
Following \citet{liu2021oodSurvey}, OOD generalization methods can be organized into (i) unsupervised representation learning, (ii) supervised model learning, and (iii) optimization-based robustness (e.g., DRO).
Our approach is primarily \textbf{supervised model learning} in the \textbf{IRM/invariant learning} family \citep{arjovsky2019irm,ahuja2020irmgames,lai2024tvirm,wang2025oodtvirm}, while also admitting an \textbf{optimization-based} interpretation via adversarially tilted empirical measures within environments \citep{rahimian2019droreview,duchi2021uniformdro}.
This dual view is especially relevant for robustness across both diversity- and correlation-shift regimes \citep{beery2018terra,zhang2023nicoDG}.

\subsection{Shift taxonomy and benchmark regimes.}
A growing body of work emphasizes that OOD failures are driven by at least two qualitatively different shifts: \textbf{diversity shift} and \textbf{correlation shift} \citep{ye2022oodbench}.
Classical domain generalization benchmarks constructed from multiple domains (e.g., PACS-style splits) largely stress diversity shift \citep{li2017PACS},
while spurious-correlation benchmarks (e.g., Colored MNIST variants) stress correlation shift by construction \citep{arjovsky2019irm,pengcui2023covariaterandomweight}.
These regimes motivate evaluating whether a method remains stable when either axis dominates.

\subsection{Invariant learning and TV-based stationarity.}
Invariant learning aims to learn representations that support a single predictor across environments \citep{arjovsky2019irm,ahuja2020irmgames,chang2020IRMrationalization}.
TV-based interpretations cast IRM-style penalties as enforcing stationarity of risk with respect to a dummy classifier parameter, providing a variational perspective on invariance \citep{lai2024tvirm,wang2025oodtvirm}.
In practice, two limitations are repeatedly observed:
(i) environment annotations may be unavailable or expensive, and learning invariance without environment information is impossible without additional assumptions or auxiliary signals \citep{lin2022zin};
(ii) many objectives effectively operate on environment-aggregated risks, implicitly assuming within-environment homogeneity, which can dilute minority modes and tail examples that dominate errors when diversity shift is present \citep{ye2022oodbench,zhang2023missingirm}.
Our method targets this second limitation by enforcing stationarity \emph{per environment} on environment-conditioned, tail-reweighted risks.

\subsection{Distributionally robust optimization, group robustness, and sample reweighting.}
DRO minimizes worst-case risk over a set of plausible test distributions \citep{duchi2021uniformdro,rahimian2019droreview}, and group DRO instantiates this idea by optimizing the maximum loss over pre-defined groups \citep{sagawa2020groupdro}.
In high-capacity regimes, naive worst-case objectives can degenerate without strong regularization (e.g., early stopping or heavier $\ell_2$ penalties), clarifying that robustness does not emerge automatically \citep{sagawa2020groupdro,duchi2021uniformdro}.
Complementarily, sample reweighting and stable learning emphasize reshaping the empirical measure so that hard or informative samples dominate gradients \citep{pengcui2020stablereweightSRDO,kuang2020stableprediction,xu2021whystablelearning,pengcui2021deepstable,pengcui2025yuhanSampleweight}, including practical two-stage heuristics \citep{liu2021jtt,nam2020LFF}.
Reweighting can be statistically fragile when weights concentrate excessively, motivating explicit controls on weight concentration \citep{xu2021whystablelearning,zhouPengcui2022maple}.
Distinct from prior work, we (i) condition reweighting \emph{within} each environment rather than globally, and (ii) couple the learned weights to both the main risk and the TV/IRM stationarity penalty.

\subsection{Invariant learning with (and without) environment partitions.}
To address missing or imperfect environment partitions, recent work infers latent environments that are informative for invariant learning, including two-stage procedures and interactive formulations \citep{creager2021eiil,liu2021hrm,lin2022zin,tan2023TIVAtiva}.
Consistent with ZIN-style perspectives, we do not claim novelty in the inference component itself \citep{lin2022zin,wang2025oodtvirm}.
Our contribution is to integrate inferred (soft) environments into an environment-conditioned tail reweighting TV-IRM objective:
both the supervised risk and the stationarity penalty are computed \emph{per inferred environment}, while an environment-wise KL regularizer stabilizes the tail reweighting adversary.

\section{Preliminaries}

\subsection{Multi-environment supervised learning (setup)}
We study supervised learning with multiple training environments
$\mathcal{E}_{\mathrm{tr}}=\{1,\dots,E\}$.
Environment $e$ induces a distribution $P_e$ over $\mathcal{X}\times\mathcal{Y}$ and
a sample set $\mathcal{D}^{(e)}=\{(x_i^{(e)},y_i^{(e)})\}_{i=1}^{n_e}$.
We write the predictor as $w\circ\Phi$, where $\Phi:\mathcal{X}\to\mathcal{H}$ is a representation
and $w:\mathcal{H}\to\mathcal{Y}$ is a predictor head.

Throughout we use \emph{uniform-over-environments} aggregation: for any per-environment quantity $A_e$,
\begin{equation}
\mathbb{E}_{e}[A_e] \;\triangleq\; \frac{1}{E}\sum_{e=1}^E A_e.
\label{eq:env-mean}
\end{equation}

\subsection{ERM and TV-style invariant learning (TV-$\ell_1$ only)}
Let $\ell(\hat y,y)$ be a loss (e.g., cross-entropy). The empirical risk in environment $e$ is
\begin{equation}
\widehat R_e(w\circ\Phi)\;\triangleq\;\frac{1}{n_e}\sum_{i=1}^{n_e}
\ell\big((w\circ\Phi)(x_i^{(e)}),y_i^{(e)}\big).
\label{eq:emp-risk}
\end{equation}
Environment-mean ERM minimizes $\mathbb{E}_e[\widehat R_e(w\circ\Phi)]$.

To encourage invariance across environments, we use a TV/IRM-style stationarity probe:
we introduce a \emph{dummy scalar} $w$ (fixed at $w=1$) and measure per-environment stationarity
violations via $\nabla_w$.
We focus on the TV-$\ell_1$ penalty
\begin{equation}
P_{\mathrm{TV}}(\Phi)\;\triangleq\;
\mathbb{E}_e\Big[\big\|\nabla_w \widehat R_e(w\circ\Phi)\big|_{w=1}\big\|_1\Big].
\label{eq:tv-l1}
\end{equation}
Following primal--dual TV-IRM formulations, we can write a Lagrangian view with an adaptive multiplier:

\begin{equation}
\begin{aligned}
\min_{\Phi}\ \max_{\Psi}\quad
& \mathbb{E}_e\big[\widehat R_e(w\circ\Phi)\big]
\;+\;
\lambda(\Psi,\Phi)\,P_{\mathrm{TV}}(\Phi),
\\
& \text{s.t.}\ \lambda(\Psi,\Phi)\ge 0.
\end{aligned}
\label{eq:tv-irm-primal-dual}
\end{equation}

where the dual network $\Psi$ outputs $\lambda(\Psi,\Phi)$ and $\lambda$ multiplies only the TV term.

\subsection{ECTR: environment-conditioned tail reweighting with stability regularization}
We now introduce \textbf{ECTR} (\emph{Environment-Conditioned Tail Reweighting}).
Given a minibatch $\{(x_i,y_i)\}_{i=1}^N$, a reweighting network outputs scores $\{s_i\}_{i=1}^N$
and defines a batch-level distribution
\begin{equation}
\pi_\theta(i)=\mathrm{softmax}(s_i),\qquad \sum_{i=1}^N \pi_\theta(i)=1.
\label{eq:global-weights}
\end{equation}

\paragraph{Environment membership.}
Let $m_{i,e}$ denote environment membership.
For known environments, $m_{i,e}=\mathbf{1}[g_i=e]$.
For inferred environments, $m_{i,e}=q_\eta(e\mid i)$ with $\sum_e q_\eta(e\mid i)=1$.

\paragraph{Environment-conditioned weights and tail risk.}
Define the environment mass and conditional weights
\begin{equation}
\mathrm{mass}_e \triangleq \sum_{i=1}^N \pi_\theta(i)\,m_{i,e},\qquad
\pi_\theta(i\mid e)\triangleq \frac{\pi_\theta(i)\,m_{i,e}}{\mathrm{mass}_e}.
\label{eq:env-conditional-weights}
\end{equation}
The environment-conditioned (tail) empirical risk is
\begin{equation}
\widehat R_{e,\theta}(w\circ\Phi)\;\triangleq\;
\sum_{i=1}^N \pi_\theta(i\mid e)\,\ell\big((w\circ\Phi)(x_i),y_i\big),
\label{eq:env-tail-risk}
\end{equation}
and the (uniform) main risk is
\begin{equation}
\widehat R_{\mathrm{main}}(w\circ\Phi)\;\triangleq\;
\mathbb{E}_e\big[\widehat R_{e,\theta}(w\circ\Phi)\big].
\label{eq:main-risk}
\end{equation}

\paragraph{TV penalty on tail risk (ECTR).}
ECTR enforces invariance using the same environment-conditioned tail risk:
\begin{equation}
P_{\mathrm{TV}}(\Phi,\theta)\;\triangleq\;
\mathbb{E}_e\Big[\big\|\nabla_w \widehat R_{e,\theta}(w\circ\Phi)\big|_{w=1}\big\|_1\Big].
\label{eq:ectr-tv-l1}
\end{equation}

\paragraph{Environment-wise KL stabilization.}
To control weight concentration within each environment, define the environment-uniform distribution
\begin{equation}
\mathrm{Unif}_e(i)\triangleq \frac{m_{i,e}}{\sum_{j=1}^N m_{j,e}},
\label{eq:env-uniform}
\end{equation}
and the environment-wise KL penalty
\begin{equation}
\mathrm{KL}_{\mathrm{env}}(\theta)\;\triangleq\;
\mathbb{E}_e\Big[\mathrm{KL}\big(\pi_\theta(\cdot\mid e)\ \|\ \mathrm{Unif}_e\big)\Big].
\label{eq:env-kl}
\end{equation}
In the inferred-environment setting, $m_{i,e}$ is detached when computing $\mathrm{KL}_{\mathrm{env}}$
so that this term regularizes only $\theta$.

\subsection{Latent environment inference (optional)}
When environment labels are unavailable, an inference network produces a soft partition
$q_\eta(e\mid i)$ over $E$ latent environments, yielding $m_{i,e}=q_\eta(e\mid i)$.
All ECTR quantities (environment-conditioned weights, tail risks, and TV penalties) are computed
using the same definitions as in the known-environment case.

\section{Methodology}
\label{sec:method}

\subsection{Overview and design principles}
\label{sec:method_overview}

We propose \emph{Environment-Conditioned Tail Reweighting} for TV/IRM, abbreviated as
\textbf{ECTR}. The method targets the common \emph{mixed} out-of-distribution (OOD) regime,
where \emph{correlation shift} (spurious feature--label associations varying across environments)
and \emph{diversity shift} (tail / rare / hard samples dominating error) co-exist.

Our method is built on three design principles:
\begin{itemize}
    \vspace{-2mm}
    \item \textbf{Main risk:} the environment average of an environment-conditioned tail risk.
    \vspace{-1mm}
    \item \textbf{TV/IRM penalty:} a per-environment stationarity probe $\nabla_w$ computed on the same tail risk.
    \vspace{-1mm}
    \item \textbf{Stability:} an environment-wise KL regularizer that acts only on the tail-weight adversary.
\end{itemize}

\noindent\textbf{Notation (brief).}
We follow the notation introduced in the preliminaries.
In each mini-batch $\mathcal{B}=\{(x_i,y_i)\}_{i=1}^N$ with $E$ environments, we learn a predictor $\Phi$
and use a fixed \emph{dummy scalar classifier} $w$ (scale probe) to define per-example loss
$\ell_i \triangleq \ell\!\left(w\circ \Phi(x_i),y_i\right)$, always evaluated at $w=1$ (and $w$ is never updated).
We denote by $m_{i,e}\in[0,1]$ the environment assignment mass (one-hot if environments are known, soft if inferred).

\subsection{Core construction: environment-conditioned tail risk, tail-sensitive TV penalty, and KL stabilization}
\label{sec:method_core}

\paragraph{(i) Global tail weights within a mini-batch.}
A tail-weight adversary $\theta$ produces scores $\{s_i\}_{i=1}^N$ and defines a global distribution
over examples:
\begin{equation}
\pi_\theta(i)=\frac{\exp(s_i)}{\sum_{j=1}^N \exp(s_j)},
\qquad \sum_{i=1}^N \pi_\theta(i)=1.
\label{eq:global_pi}
\end{equation}
Intuitively, $\theta$ allocates probability mass to tail / hard examples.

\paragraph{(ii) Environment-conditioned weights.}
To decouple \emph{environment-level} invariance from \emph{within-environment} tail robustness, we
condition the global weights on each environment:
\begin{equation}
\mathrm{mass}_e \triangleq \sum_{i=1}^N \pi_\theta(i)\, m_{i,e},
\qquad
\pi_\theta(i\mid e) \triangleq \frac{\pi_\theta(i)\, m_{i,e}}{\mathrm{mass}_e}.
\label{eq:cond_pi}
\end{equation}
By construction, $\sum_i \pi_\theta(i\mid e)=1$ for each $e$ whenever $\mathrm{mass}_e>0$.
In practice we use $\mathrm{mass}_e \leftarrow \mathrm{mass}_e + \varepsilon$ for numerical stability.

\paragraph{(iii) Environment-conditioned tail risk (D1).}
We define the environment-conditioned tail risk for environment $e$ as
\begin{equation}
R_{e,\theta}(w\circ\Phi)
\triangleq
\sum_{i=1}^N \pi_\theta(i\mid e)\,
\ell\!\left(w\circ\Phi(x_i),y_i\right),
\label{eq:Re_theta}
\end{equation}
and the \textbf{main risk} as the environment mean
\begin{equation}
R_{\mathrm{main}}(\Phi,\theta)
\triangleq
\frac{1}{E}\sum_{e=1}^E R_{e,\theta}(w\circ\Phi).
\label{eq:Rmain}
\end{equation}

\paragraph{Why environment-conditioned (rather than global) tail risk?}
A global tail objective $\sum_i \pi_\theta(i)\ell_i$ can entangle ``hardness'' with
environmental heterogeneity (e.g., noisier or under-represented environments).
Eq.~\eqref{eq:cond_pi} confines tail emphasis \emph{within} each environment, while Eq.~\eqref{eq:Rmain}
keeps a fixed environment-level aggregation, preserving the semantics of IRM/TV constraints
(environment-to-environment comparability).

\paragraph{(iv) Tail-sensitive TV/IRM stationarity penalty (no outer square).}
TV/IRM enforces invariance by probing stationarity of environment risks under the dummy classifier $w$.
Our key difference is \textbf{constraint coupling}: we compute stationarity on the tail risk
$R_{e,\theta}$ (Eq.~\eqref{eq:Re_theta}), not on an unweighted environment risk.

\textbf{TV-$\ell_1$:}
\begin{equation}
P_{\mathrm{TV}\text{-}\ell_1}(\Phi,\theta)
\triangleq
\frac{1}{E}\sum_{e=1}^E
\left\lVert
\nabla_w\, R_{e,\theta}(w\circ\Phi)
\right\rVert_1
\Big|_{w=1}.
\label{eq:tv_l1}
\end{equation}

An invariance adversary $\Psi$ outputs a nonnegative coefficient $\lambda(\Psi,\Phi)\ge 0$
that multiplies only the TV term:
\begin{equation}
\lambda(\Psi,\Phi)\, P_{\mathrm{TV}\text{-}\ell_1}(\Phi,\theta),
\label{eq:lambda_tv}
\end{equation}
In implementation, $\lambda$ can be ensured nonnegative via a softplus/exponential parameterization.

\paragraph{(v) Environment-wise KL stabilization (regularize only $\theta$).}
Adversarial reweighting is prone to \emph{weight collapse}, reducing effective sample size and destabilizing optimization.
We stabilize the tail adversary using an environment-wise KL-to-uniform regularizer.
Let $\mathrm{Unif}_e$ denote the (batch) uniform distribution within environment $e$ as defined in the preliminaries
(Eq.~\eqref{eq:env-uniform}). We define
\begin{equation}
\mathrm{KL}_{\mathrm{env}}(\theta)
\triangleq
\frac{1}{E}\sum_{e=1}^E
\mathrm{KL}\!\left(
\pi_\theta(\cdot\mid e)\;\|\;\mathrm{Unif}_e
\right).
\label{eq:kl_env}
\end{equation}

\paragraph{Detach rule under inferred environments.}
In the inferred-environment setting, $\mathrm{KL}_{\mathrm{env}}$ is computed with $m_{i,e}=q_\eta(e\mid i)$ detached
(no gradient to $\eta$), i.e., the KL term regularizes only the tail adversary $\theta$.

\subsection{Objectives and optimization}
\label{sec:method_objective_optim}

We now present the final objectives (TV-$\ell_1$ shown; TV-$\ell_2$ is analogous).

\paragraph{Known environments (no environment inference).}
We solve the outer min--max problem:
\begin{equation}
\begin{array}{c}
\min_{\Phi}\ \max_{\theta,\Psi}\quad
\mathcal{L}_{\mathrm{known}}(\Phi,\theta,\Psi)
\;\triangleq\; \\[2pt]
R_{\mathrm{main}}(\Phi,\theta)
+\lambda(\Psi,\Phi)\,P_{\mathrm{TV}\text{-}\ell_1}(\Phi,\theta)
-\beta\,\mathrm{KL}_{\mathrm{env}}(\theta).
\end{array}
\label{eq:objective_known}
\end{equation}

\paragraph{Inferred environments (with an environment inference adversary $\eta$).}
We introduce soft assignments $m_{i,e}=q_\eta(e\mid i)$ and solve a coupled outer/inner game:
\begin{equation}
\begin{array}{c}
\min_{\Phi}\ \max_{\theta,\Psi}\quad
\mathcal{L}_{\mathrm{outer}}(\Phi,\theta,\Psi;\eta)
\;\triangleq\; \\[2pt]
R_{\mathrm{main}}(\Phi,\theta;\eta)
+\lambda(\Psi,\Phi)\,P_{\mathrm{TV}\text{-}\ell_1}(\Phi,\theta;\eta)
-\beta\,\mathrm{KL}_{\mathrm{env}}(\theta).
\end{array}
\label{eq:objective_outer_inferred}
\end{equation}

where the KL term is computed with $m_{i,e}$ detached (no gradients to $\eta$).
The environment inference adversary updates only by maximizing the TV penalty:
\begin{equation}
\max_{\eta}\;
\mathcal{L}_{\mathrm{inner}}(\Phi,\theta;\eta)
\triangleq
P_{\mathrm{TV}\text{-}\ell_1}(\Phi,\theta;\eta).
\label{eq:objective_inner_inferred}
\end{equation}

\paragraph{Training semantics (players and directions).}
$\Phi$ is updated by gradient descent on the outer objective;
$\theta$ and $\Psi$ by gradient ascent on the outer objective;
$\eta$ (if present) by gradient ascent on the inner TV objective; and $w$ is fixed and used only for probing stationarity.

\paragraph{Alternating updates.}
Algorithm~\ref{alg:ectr_tvirM} describes one iteration on a mini-batch.

\begin{algorithm}[t]
\caption{ECTR }
\label{alg:ectr_tvirM}
\begin{algorithmic}[1]
\REQUIRE Mini-batch $\mathcal{B}=\{(x_i,y_i)\}_{i=1}^N$; environment count $E$; regularization $\beta$; fixed $w\leftarrow 1$.
\STATE Compute predictions $\hat y_i = \Phi(x_i)$ and losses $\ell_i=\ell(w\circ \hat y_i, y_i)$.
\STATE Tail adversary scores $s_i \leftarrow \theta(\cdot)$; global weights $\pi_\theta(i)\leftarrow \mathrm{softmax}(s_i)$.
\STATE Obtain environment assignments $m_{i,e}$ (one-hot if known; $q_\eta(e\mid i)$ if inferred).
\FOR{$e=1$ {\bfseries to} $E$}
    \STATE $\mathrm{mass}_e \leftarrow \sum_i \pi_\theta(i)\,m_{i,e}$ \quad (use $\mathrm{mass}_e+\varepsilon$ in practice)
    \STATE $\pi_\theta(i\mid e) \leftarrow \pi_\theta(i)m_{i,e}/\mathrm{mass}_e$
    \STATE $R_{e,\theta} \leftarrow \sum_i \pi_\theta(i\mid e)\,\ell_i$
    \STATE $g_e \leftarrow \nabla_w R_{e,\theta}(w\circ\Phi)\big|_{w=1}$ \quad ($w$ fixed)
\ENDFOR
\STATE $R_{\mathrm{main}} \leftarrow \frac{1}{E}\sum_e R_{e,\theta}$,\quad
$P_{\mathrm{TV}} \leftarrow \frac{1}{E}\sum_e \|g_e\|_1$.
\STATE Compute $\mathrm{KL}_{\mathrm{env}}(\theta)=\frac{1}{E}\sum_e \mathrm{KL}(\pi_\theta(\cdot\mid e)\|\mathrm{Unif}_e)$
(using $\mathrm{Unif}_e$ from Eq.~\eqref{eq:env-uniform}; detach $m_{i,e}$ if inferred env).
\STATE $\mathcal{L}_{\mathrm{outer}} \leftarrow R_{\mathrm{main}}+\lambda(\Psi,\Phi)P_{\mathrm{TV}}-\beta \mathrm{KL}_{\mathrm{env}}$.
\STATE $\theta \leftarrow \theta + \alpha_\theta \nabla_\theta \mathcal{L}_{\mathrm{outer}}$ \hfill (ascent)
\STATE $\Psi \leftarrow \Psi + \alpha_\Psi \nabla_\Psi \big(\lambda(\Psi,\Phi)P_{\mathrm{TV}}\big)$ \hfill (ascent)
\IF{inferred env}
    \STATE $\eta \leftarrow \eta + \alpha_\eta \nabla_\eta P_{\mathrm{TV}}$ \hfill (inner ascent; no KL gradients)
\ENDIF
\STATE $\Phi \leftarrow \Phi - \alpha_\Phi \nabla_\Phi \mathcal{L}_{\mathrm{outer}}$ \hfill (descent)
\end{algorithmic}
\end{algorithm}

\subsection{Interpretation: per-environment KL-regularized DRO and why environment-conditioning matters}
\label{sec:method_interpretation}

\paragraph{Per-environment robust risk as a KL-regularized adversary.}
Fix an environment $e$ and consider
\begin{equation}
\max_{\pi(\cdot\mid e)\in\Delta}
\ \sum_{i=1}^N \pi(i\mid e)\,\ell_i
\ -\ \beta\,\mathrm{KL}\!\left(\pi(\cdot\mid e)\;\|\;\mathrm{Unif}_e\right),
\label{eq:kl_reg_dro}
\end{equation}
where $\Delta$ is the probability simplex and $\mathrm{Unif}_e$ is defined by Eq.~\eqref{eq:env-uniform}.
This is a KL-regularized worst-case reweighting objective within environment $e$.

\begin{proposition}[Gibbs form of the KL-regularized tail distribution]
\label{prop:gibbs}
For fixed losses $\{\ell_i\}$ and $\beta>0$, the optimizer of Eq.~\eqref{eq:kl_reg_dro} satisfies
\begin{equation}
\pi^*(i\mid e)
=
\frac{\mathrm{Unif}_e(i)\exp(\ell_i/\beta)}{\sum_{j=1}^N \mathrm{Unif}_e(j)\exp(\ell_j/\beta)}.
\label{eq:gibbs_pi}
\end{equation}
\end{proposition}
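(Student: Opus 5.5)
The plan is to prove this via the Donsker--Varadhan (Gibbs) variational identity, which avoids Lagrange multipliers and boundary cases. Write $u \triangleq \mathrm{Unif}_e$ and let $S_e \triangleq \{i : m_{i,e}>0\}$ be its support. Any feasible $\pi(\cdot\mid e)$ with finite KL must be supported in $S_e$, since otherwise $\mathrm{KL}(\pi\|u)=+\infty$ and the objective is $-\infty$. So I restrict the maximization to distributions on $S_e$. I define the candidate $\pi^*(i\mid e) = u(i)e^{\ell_i/\beta}/Z$ with $Z \triangleq \sum_j u(j)e^{\ell_j/\beta}$. This is well defined and strictly positive on $S_e$, because $Z>0$ whenever $S_e\neq\emptyset$, which is implicitly assumed for the environment to be nonempty.

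The key step is an exact rewriting of the objective. For any $\pi$ supported on $S_e$,
\[
\sum_i \pi(i)\ell_i - \beta\,\mathrm{KL}(\pi\|u)
= \beta\sum_i \pi(i)\log\frac{u(i)e^{\ell_i/\beta}}{\pi(i)}
= \beta\log Z - \beta\,\mathrm{KL}(\pi\|\pi^*).
\]
To verify it, substitute $u(i)e^{\ell_i/\beta} = Z\,\pi^*(i)$ and use $\sum_i\pi(i)=1$. The only care needed is the convention $0\log 0 = 0$ for indices with $\pi(i)=0$.

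Gibbs' inequality gives $\mathrm{KL}(\pi\|\pi^*)\ge 0$, with equality iff $\pi=\pi^*$. Hence the objective is at most $\beta\log Z$, and this bound is attained uniquely at $\pi^*$. This proves Eq.~\eqref{eq:gibbs_pi}, and as a by-product gives the optimal value $\beta\log\sum_j u(j)e^{\ell_j/\beta}$, the usual log-sum-exp tilted risk.

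An alternative route is through the KKT conditions. Concavity of the objective comes from strict convexity of KL. Stationarity of the Lagrangian with multiplier $\nu$ for $\sum_i\pi(i)=1$ gives $\ell_i - \beta(\log(\pi(i)/u(i))+1) - \nu = 0$, hence $\pi(i)\propto u(i)e^{\ell_i/\beta}$. The interior optimum is automatic because the KL gradient diverges at the boundary of the simplex on $S_e$. I would mention this route only as a remark. The main obstacle in either approach is the bookkeeping for the support: handling indices outside $S_e$, including the soft-assignment case where $m_{i,e}$ may vanish, and the $0\log 0$ conventions. The variational identity handles these cleanly, so I would use it as the primary argument.
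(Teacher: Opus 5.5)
Your proof is correct, but it takes a different route from the paper. The paper offers only a Lagrangian sketch: attach a multiplier to $\sum_i \pi(i\mid e)=1$, set the derivative in $\pi(i\mid e)$ to zero, and normalize. That is exactly the KKT route you relegate to a remark. Your primary argument instead completes the KL, rewriting the objective as $\beta\log Z-\beta\,\mathrm{KL}(\pi\,\|\,\pi^*)$, and concludes by Gibbs' inequality.

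This buys rigor that the sketch leaves implicit. First-order stationarity with only the equality constraint merely identifies a critical point. To turn it into a proof, one must also invoke concavity to get global optimality and uniqueness. One must also argue that the nonnegativity constraints are inactive, and note that indices with $m_{i,e}=0$ are forced to zero mass; there $\log(\pi/\mathrm{Unif}_e)$ is not even differentiable.

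Your identity gives existence, uniqueness and global optimality at once. It handles the support $S_e$ and the $0\log 0$ convention without case analysis. It also yields the optimal value $\beta\log\sum_j \mathrm{Unif}_e(j)\exp(\ell_j/\beta)$ as a by-product. From that value both regimes of Remark~\ref{rem:limits} can be read off directly: the within-environment mean loss as $\beta\to\infty$, and $\max_{i\in S_e}\ell_i$ as $\beta\to 0$.

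The Lagrangian route, in exchange, is more mechanical. It extends to other convex regularizers, such as general $f$-divergences, for which no exact completion identity of this kind is available.
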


\noindent\textit{Proof sketch.}
Write the Lagrangian with a multiplier enforcing
$\sum_i \pi(i\mid e)=1$,
take derivatives w.r.t.\ $\pi(i\mid e)$,
and solve for $\pi$; normalization yields Eq.~\eqref{eq:gibbs_pi}.

\noindent\textbf{Connection to DRO.}
ECTR can be viewed as an environment-conditioned, sample-level adversarial reweighting mechanism:
instead of taking a worst-case over environments as in group DRO, our adversary tilts the empirical
measure \emph{within each environment} toward hard/rare samples, while keeping uniform aggregation
over environments.

\vspace{-2mm}
\paragraph{Connection to ECTR and the role of environment-wise KL.}
Our adversary $\theta$ parameterizes a flexible family of reweighting distributions via softmax scores
(Eq.~\eqref{eq:global_pi}) followed by environment conditioning (Eq.~\eqref{eq:cond_pi}).
The regularizer $\mathrm{KL}_{\mathrm{env}}$ (Eq.~\eqref{eq:kl_env}) prevents degenerate collapse
and yields a controllable interpolation between uniform ERM-like training and tail-dominated training.

\begin{remark}[Interpolation between ERM and max-loss within each environment]
\label{rem:limits}
As $\beta\to\infty$, the KL penalty forces $\pi(\cdot\mid e)$ towards $\mathrm{Unif}_e$ and
$R_{e,\theta}$ approaches uniform empirical risk within environment $e$.
As $\beta\to 0$, $\pi(\cdot\mid e)$ concentrates on the largest-loss samples, approaching a max-loss objective
within each environment.
\end{remark}
\vspace{-2mm}
\paragraph{Why environment-conditioning is essential (not merely a normalization trick).}
Environment-conditioning ensures that tail emphasis is applied \emph{within} each environment while keeping a fixed
environment-level aggregation (Eq.~\eqref{eq:Rmain}). This prevents global tail reweighting from implicitly
reweighting environments (thereby entangling ``hardness'' with environment heterogeneity), and aligns the tail risk
used in the main objective with the per-environment stationarity constraint enforced by TV/IRM.

\section{Experiments}

\begin{table*}[t]

\centering
\caption{\textbf{Main results (tabular/time-series).}
We report mean$\pm$std over 3 runs. For all columns, higher is better except House price (MSE), where lower is better.}
\label{tab:t1}
\resizebox{\textwidth}{!}{
\begin{tabular}{l cc cc cc cc cc}
\toprule
 & \multicolumn{2}{c}{\textbf{Simulation Data (Acc)\%$\uparrow$}} &
 \multicolumn{2}{c}{\textbf{House price (MSE)$\downarrow$}} &
 \multicolumn{2}{c}{\textbf{CelebA (Acc)\%$\uparrow$}} &
 \multicolumn{2}{c}{\textbf{Landcover (Acc)\%$\uparrow$}} &
 \multicolumn{2}{c}{\textbf{Adult income (Acc)\%$\uparrow$}} \\
\cmidrule(lr){2-3}
\cmidrule(lr){4-5}
\cmidrule(lr){6-7}
\cmidrule(lr){8-9}
\cmidrule(lr){10-11}
\textbf{Method}
& \textbf{Mean} & \textbf{Worst}
& \textbf{Mean} & \textbf{Worst}
& \textbf{Mean} & \textbf{Worst}
& \textbf{Mean} & \textbf{Worst}
& \textbf{Mean} & \textbf{Worst} \\
\midrule
\multicolumn{11}{l}{\textit{Methods requiring environment annotations}} \\

IRM~\citep{arjovsky2019irm} &
73.55$\pm$2.69 & 66.32$\pm$3.20 &
46.54$\pm$6.83 & 63.29$\pm$8.45 &
77.34$\pm$3.04 & 74.16$\pm$3.62 &
-- & -- &
82.32$\pm$0.35 & 79.76$\pm$0.47 \\

IRM-TV-$\ell_1$~\citep{lai2024tvirm} &
75.06$\pm$2.48 & 70.38$\pm$3.14 &
44.15$\pm$4.41 & 61.06$\pm$6.42 &
-- & -- &
-- & -- &
83.31$\pm$0.48 & 80.93$\pm$0.61 \\

OOD-TV-IRM-$\ell_1$~\citep{wang2025oodtvirm} &
76.35$\pm$1.77 & 70.51$\pm$2.31 &
41.97$\pm$4.38 & 57.39$\pm$5.88 &
\textbf{78.99$\pm$0.43} & 74.93$\pm$0.94 &
-- & -- &
84.09$\pm$0.63 & 82.18$\pm$0.95 \\

\textbf{ECTR (ours)} &
\textbf{78.02$\pm$0.68} & \textbf{76.87$\pm$1.19} &
\textbf{37.48$\pm$4.50} & \textbf{52.86$\pm$4.72} &
78.54$\pm$1.02 & \textbf{75.28$\pm$1.64} &
-- & -- &
\textbf{84.53$\pm$0.44} & \textbf{82.31$\pm$0.87} \\

\midrule
\multicolumn{11}{l}{\textit{Methods without environment annotations}} \\

ERM~\citep{vapnik1998ERM} &
53.08$\pm$0.71 & 7.24$\pm$1.21 &
44.61$\pm$0.55 & 62.15$\pm$0.94 &
69.58$\pm$6.91 & 49.03$\pm$7.25 &
59.07$\pm$0.66 & 58.35$\pm$0.91 &
82.16$\pm$0.28 & 79.55$\pm$0.33 \\

EIIL~\citep{creager2021eiil} &
69.58$\pm$0.97 & 59.72$\pm$1.45 &
88.29$\pm$2.98 & 93.14$\pm$3.01 &
70.52$\pm$4.87 & 51.45$\pm$6.38 &
63.98$\pm$1.44 & 60.16$\pm$1.62 &
72.77$\pm$0.72 & 70.94$\pm$0.79 \\

LfF~\citep{nam2020LFF} &
-- & -- &
-- & -- &
57.36$\pm$1.27 & 50.27$\pm$1.31 &
57.61$\pm$2.48 & 54.53$\pm$2.96 &
75.04$\pm$3.40 & 73.00$\pm$3.09 \\

ZIN~\citep{lin2022zin} &
73.61$\pm$0.29 & 65.10$\pm$0.78 &
33.25$\pm$5.71 & 49.82$\pm$8.05 &
76.03$\pm$0.29 & 67.97$\pm$0.61 &
64.31$\pm$1.94 & 62.25$\pm$2.72 &
82.26$\pm$1.07 & 79.67$\pm$1.37 \\

TIVA~\citep{tan2023TIVAtiva} &
59.32$\pm$5.38 & 31.06$\pm$7.34 &
32.90$\pm$2.62 & 44.51$\pm$3.29 &
71.95$\pm$1.42 & 61.42$\pm$1.03 &
55.72$\pm$0.92 & 52.60$\pm$1.40 &
81.95$\pm$0.75 & 79.28$\pm$0.92 \\

Minimax-TV-$\ell_1$~\citep{lai2024tvirm} &
77.08$\pm$0.87 & 72.56$\pm$1.58 &
33.95$\pm$5.61 & 49.83$\pm$8.25 &
-- & -- &
63.77$\pm$1.48 & 63.25$\pm$1.94 &
83.33$\pm$0.66 & 80.95$\pm$0.75 \\

OOD-TV-Minimax-$\ell_1$~\citep{wang2025oodtvirm} &
\textbf{77.95$\pm$1.11} & 73.09$\pm$1.83 &
33.28$\pm$5.50 & 46.59$\pm$7.03 &
76.42$\pm$1.62 & 68.38$\pm$0.34 &
\textbf{64.98$\pm$1.57 } & 63.72$\pm$2.13 &
83.51$\pm$0.99 & 81.24$\pm$1.24 \\

\textbf{ECTR (ours, inferred-env)} &
77.83$\pm$0.59 & \textbf{76.95$\pm$1.31} &
\textbf{31.52$\pm$3.15} & \textbf{42.83$\pm$3.73} &
\textbf{77.26$\pm$1.08} & \textbf{73.22$\pm$1.25} &
64.59$\pm$1.33 & \textbf{63.80$\pm$1.67} &
\textbf{84.56$\pm$0.52} & \textbf{82.15$\pm$0.59} \\

\bottomrule
\end{tabular}
}
\end{table*}

We evaluate \textbf{ECTR} on a suite of benchmarks spanning tabular classification, image
classification, time-series classification, and tabular regression, including
House price\footnote{\url{https://www.kaggle.com/c/house-prices-advanced-regression-techniques}},
CelebA~\citep{liu2015celeba},
Landcover~\citep{xie2021Landcoverinnout},
Adult income prediction\footnote{\url{https://archive.ics.uci.edu/dataset/2/adult}},
PACS~\citep{li2017PACS},
Colored MNIST~\citep{arjovsky2019irm},
and NICO~\citep{zhang2023nicoDG}.
For classification tasks, we report \textbf{accuracy} (higher is better); for House price, we report
\textbf{mean squared error (MSE)} (lower is better). When multiple test environments/groups are
available, \textbf{Mean} denotes the average metric over test environments, while \textbf{Worst}
denotes the worst-case metric (minimum accuracy for classification; maximum MSE for regression).
Each experiment is repeated $3$ times with different random seeds, and we report mean $\pm$ standard
deviation when available (otherwise the mean).

Unless otherwise specified, we follow the experimental protocols and preprocessing used in prior OOD
generalization evaluations~\citep{lin2022zin,lai2024tvirm,wang2025oodtvirm}.
For inferred-env methods, we provide standard auxiliary variables for environment inference as specified
below; for known-env methods, we use the dataset-provided group/domain indicators when available.

\subsection{Baselines.}
We compare against methods that \textbf{require} ground-truth environment annotations during training:
ERM~\citep{vapnik1998ERM},
IRM~\citep{arjovsky2019irm},
groupDRO~\citep{sagawa2020groupdro},
IRM-TV-$\ell_1$~\citep{lai2024tvirm},
and OOD-TV-IRM-$\ell_1$~\citep{wang2025oodtvirm}.
We also compare against methods that \textbf{do not require} environment annotations:
EIIL~\citep{creager2021eiil},
LfF~\citep{nam2020LFF},
ZIN~\citep{lin2022zin},
TIVA~\citep{tan2023TIVAtiva},
Minimax IRM-TV-$\ell_1$~\citep{lai2024tvirm},
and OOD-TV-Minimax-$\ell_1$~\citep{wang2025oodtvirm}.
We include \textbf{ECTR} in both \emph{known-env} (oracle environment labels) and
\emph{inferred-env} (no environment labels) modes, consistent with Sec.~\ref{sec:method}.

\subsection{Structured and time-series benchmarks}
\label{sec:exp_structured_timeseries}

\paragraph{Simulation data.}
\label{sec:exp_logitz}
We consider a synthetic benchmark with temporal heterogeneity and distribution shift w.r.t.\ time, used
in prior work~\citep{lin2022zin,tan2023TIVAtiva,lai2024tvirm}.
Let $t\in[0,1]$ be a time index. The label $Y(t)$ depends on an invariant feature with probability $p_v$
(constant over time) and a spurious feature whose correlation varies with $p_s(t)$.
We define two environments by time segments $t\in[0,0.5)$ and $t\in[0.5,1]$, and denote a case by
$(p_s^{-},p_s^{+})$. Unless otherwise specified, we use $(p_s^{-},p_s^{+},p_v)=(0.999,0.9,0.8)$ and
report \textbf{Mean} and \textbf{Worst} accuracy over test environments.
\vspace{-2mm}
\paragraph{House price prediction (regression).}
\label{sec:exp_houseprice}
We evaluate tabular regression on the House Prices dataset to assess performance beyond classification.
Following prior protocols~\citep{lin2022zin,lai2024tvirm}, we use $15$ variables to predict house prices.
We split training/test by \textit{built year}: $[1900,1950]$ for training and $(1950,2000]$ for testing,
and normalize prices \emph{within each built year} to reduce scale differences across time.
For known-env methods, the training set is further divided into $5$ decade-based environments (10-year
bins). For inferred-env methods, \textit{built year} is provided as auxiliary information for
environment inference.
We report mean squared error (MSE; lower is better), with \textbf{Mean} and \textbf{Worst} computed over
test environments when applicable.
\vspace{-2mm}
\paragraph{CelebA.}
\label{sec:exp_celeba}
CelebA~\citep{liu2015celeba} is used to predict the \textit{Smiling} attribute, which is deliberately
correlated with \textit{Gender}.
We extract $512$-dimensional features using a pre-trained ResNet18~\citep{he2016resnet}.
For inferred-env methods, we use seven auxiliary attributes for environment inference:
\textit{Young}, \textit{Blond Hair}, \textit{Eyeglasses}, \textit{High Cheekbones}, \textit{Big Nose},
\textit{Bags Under Eyes}, and \textit{Chubby}. For known-env methods, \textit{Gender} serves as the
environment indicator.
\vspace{-2mm}
\paragraph{Landcover.}
\label{sec:exp_landcover}
Landcover~\citep{gislason2006Landecoverrandomforest,xie2021Landcoverinnout} is a satellite time-series
classification task with input dimension $46\times 8$ and $6$ land cover classes.
We use a 1D-CNN feature extractor to process the sequential inputs.
Ground-truth environment partitions are unavailable; thus latitude and longitude are used as auxiliary
variables for environment inference in the inferred-env setting.
All methods are trained on non-African data and tested on non-African (disjoint) and African regions,
reported as \textbf{IID Test} and \textbf{OOD Test}.
\vspace{-2mm}
\paragraph{Adult Income Prediction.}
\label{sec:exp_adult}
We use the Adult dataset for tabular classification, predicting whether an individual's income exceeds
\$50K/yr.
We define four environments by \textit{race}$\in\{\text{Black},\text{Non-Black}\}$ and
\textit{sex}$\in\{\text{Male},\text{Female}\}$, train on two-thirds of samples from \textit{Black Male}
and \textit{Non-Black Female}, and evaluate on all four subgroups.
For inferred-env methods, we use six integer variables as auxiliary inputs:
\textit{Age}, \textit{FNLWGT}, \textit{Education-Number}, \textit{Capital-Gain}, \textit{Capital-Loss},
and \textit{Hours-Per-Week}.
The remaining categorical variables (excluding race and sex) are one-hot encoded, optionally PCA-reduced,
concatenated, and normalized.

\subsection{Vision benchmarks}
\label{sec:exp_vision}

\paragraph{Colored MNIST.}
\label{sec:exp_cmnist}
We use Colored MNIST~\citep{arjovsky2019irm} for multi-group classification with spurious color cues.
We remove explicit background-color information and use only the mean RGB values as auxiliary variables
for environment inference.
We report \textbf{Mean} accuracy; \textbf{Worst} is left blank when a standardized multi-environment test
partition is not defined under this protocol.
\vspace{-2mm}
\paragraph{NICO.}
\label{sec:exp_nico}
NICO~\citep{zhang2023nicoDG} is a non-i.i.d.\ image benchmark with contexts, and \textbf{is commonly used
to evaluate OOD generalization under both correlation shift and diversity shift}.
It contains two superclasses: \textit{Vehicle} (9 classes) and \textit{Animal} (10 classes).
We split samples within each context into 80\% train and 20\% test, remove context annotations, and treat
contexts as latent environments; mean RGB values are used as auxiliary variables for environment
inference.
\vspace{-2mm}
\paragraph{PACS.}
\label{sec:exp_pacs}
PACS~\citep{li2017PACS,gulrajani2021Domainbedlostdg} is a domain generalization benchmark with four
domains $\{\textit{photo},\textit{art},\textit{cartoon},\textit{sketch}\}$, $7$ classes, and $9{,}991$
images of size $224\times224$.
We use leave-one-domain-out: train on three domains and test on the held-out domain.
In the known-env setting domain labels are used as environment annotations; in the inferred-env setting
they are not provided during training.
Under leave-one-domain-out cross-validation, domain identities are used \emph{only} to construct
training/validation splits for model selection, and are not provided to the model in inferred-env.
We report Mean accuracy averaged over the four runs.

\begin{table}[t]
\centering
\caption{\textbf{Main results on vision benchmarks.}
Mean accuracy (\%) is reported as mean$\pm$std over 3 runs; higher is better.}
\label{tab:t2}

\resizebox{\linewidth}{!}{
\setlength{\tabcolsep}{8pt}      
\renewcommand{\arraystretch}{1.25} 
\begin{tabular}{@{}l c c c@{}}
\toprule
 & \textbf{Colored MNIST} & \textbf{PACS} & \textbf{NICO} \\
\textbf{Method} 
& \textbf{Mean} 
& \textbf{Mean} 
& \textbf{Vehicle / Animal (Mean)} \\
\midrule

\multicolumn{4}{@{}l@{}}{\textit{Methods requiring environment annotations}} \\

IRM~\citep{arjovsky2019irm} 
& 79.50$\pm$1.11 
& 81.50$\pm$2.20 
& 65.52$\pm$0.64 / 76.11$\pm$1.55 \\

groupDRO~\citep{sagawa2020groupdro} 
& -- 
& 83.50$\pm$1.40 
& 71.93$\pm$1.86 / 87.06$\pm$0.52 \\

\textbf{ECTR (ours)}
& \textbf{97.31$\pm$0.62} 
& \textbf{84.70$\pm$1.40} 
& \textbf{85.68$\pm$0.52 / 89.91$\pm$1.10} \\

\midrule
\multicolumn{4}{@{}l@{}}{\textit{Methods without environment annotations}} \\

EIIL~\citep{creager2021eiil} 
& 82.36$\pm$1.32 
& -- 
& 69.57$\pm$1.64 / 86.45$\pm$1.73 \\

ZIN~\citep{lin2022zin} 
& 95.02$\pm$1.70 
& -- 
& 73.66$\pm$1.77 / 87.30$\pm$1.18 \\

OOD-TV-Minmax-$\ell_1$~\citep{wang2025oodtvirm} 
& 95.14$\pm$0.95 
& 83.60$\pm$1.40 
& 72.89$\pm$0.85 / 88.26$\pm$0.72 \\

\textbf{ECTR (ours, inferred-env)}
& \textbf{96.59$\pm$0.79} 
& \textbf{85.10$\pm$1.60}
& \textbf{84.24$\pm$1.13 / 88.39$\pm$0.67} \\

\bottomrule
\end{tabular}
}
\end{table}

\subsection{Main results and analysis}
\label{sec:exp_results_analysis}

Table~\ref{tab:t1} and Table~\ref{tab:t2} report the main results across benchmark regimes that emphasize
different shift axes~\citep{ye2022oodbench,lin2022zin,lai2024tvirm}. Overall, \textbf{ECTR} is consistently
competitive and often achieves the best performance across modalities, with particularly clear improvements
on \textbf{Worst}, indicating stronger robustness on the most challenging test environments. On
correlation-shift benchmarks such as Colored MNIST, ECTR achieves the best Mean accuracy, suggesting that
injecting environment-conditioned tail reweighting into the TV/IRM stationarity objective can improve
robustness when spurious correlations dominate. On diversity-shift benchmarks such as PACS, ECTR attains
the strongest Mean accuracy under leave-one-domain-out evaluation, indicating improved robustness to
out-of-support style variation. On mixed-shift benchmarks such as NICO, which combines context-driven
spurious correlations with within-class context diversity, ECTR yields large gains on both superclasses
(Vehicle/Animal), especially in the inferred-env setting, supporting that the learned latent environments
are informative and that environment-conditioned tail reweighting complements TV-based invariance under
mixed shifts.

On structured/time-series benchmarks (Table~\ref{tab:t1}), ECTR attains the strongest results on the
temporal-heterogeneity simulation, notably improving Worst over strong TV-based baselines, and achieves
the lowest error on House price regression (MSE) in both \emph{known-env} and \emph{inferred-env} settings.
On CelebA and Adult, ECTR matches or improves upon strong baselines, and the inferred-env variant improves
worst-group generalization compared to no-env methods such as ZIN and OOD-TV-Minimax. Landcover is a
challenging setting with cross-region shift and no ground-truth environment partitions; ECTR remains
competitive. Additional ablations on algorithmic modules are provided in the Supplementary Material.

\section{Conclusion and Future Work}
\label{sec:conclusion}

\paragraph{Conclusion.}
This work studied OOD generalization under a \emph{mixed} shift regime, where models face both
\emph{correlation shift} across environments and \emph{diversity shift} driven by rare or hard samples.
To address the mismatch between environment-level invariance objectives and within-environment sample
heterogeneity, we proposed \emph{Environment-Conditioned Tail Reweighting for TV/IRM}
(\textbf{ECTR}). The central idea is to construct an \emph{environment-conditioned} tail
distribution, enabling a unified formulation that injects sample-level robustness into both the main
risk and the TV-based invariance constraint. Moreover, the framework naturally extends to settings
without environment annotations via latent environment inference.
Across regression, tabular, time-series, and vision benchmarks, ECTR achieves consistent and
stable improvements, particularly enhancing worst-environment performance while maintaining strong
average accuracy, suggesting that environment-conditioned tail robustness and TV-based invariance are
complementary under mixed distribution shifts.
\vspace{-3mm}
\paragraph{Future work.}
Future work includes developing a clearer theoretical understanding of the coupled minimax learning
dynamics, including generalization characterizations under mixed shifts and stability analyses for the
interactions among the predictor, the tail reweighting adversary, and (when applicable) environment
inference. Another promising direction is to enrich the tail-robustness design space beyond the current
regularization choice, for example by exploring alternative uncertainty sets or adaptive regularization
schemes that better trade off tail emphasis against clean-sample performance, calibration, and other
practical desiderata.

\section*{Impact Statement}
This paper presents work whose goal is to advance the field of Machine Learning. There are many potential societal consequences of our work, none of which we feel must be specifically highlighted here.



\bibliography{main}
\bibliographystyle{icml2026}

\newpage
\appendix
\onecolumn

\section{Additional Material Omitted from the Main Text}
\label{app:omitted}

\subsection{Additional motivation: a more formal view of the mixed-shift regime}
\label{app:mixed_shift_formal}

However, the most consequential failure regime is not pure diversity shift or pure correlation shift,
but the \emph{mixed} regime in which the test distribution exhibits \textbf{both} unseen variations
and altered spurious correlations. Formally, if non-causal latent factors are decomposed into those
that differ in support versus those that share support but change their conditional association with
labels, then generalization can fail either due to (i) novel factors absent from training (diversity
shift) or (ii) factors that remain present but flip or reshape their label dependence (correlation
shift) \citep{pengcui2023covariaterandomweight,wang2025oodtvirm}. In practice, such mixed regimes are
common: even minor dataset variants can induce non-trivial correlation shift, while multi-domain or
multi-source datasets naturally introduce diversity shift through domain splits
\citep{dou2019DGSemanticFeatures,zhou2023dgsurvey,zhang2023nicoDG}.

The core insight of this paper is that addressing \textbf{diversity shift} and \textbf{correlation
shift} simultaneously requires redistributing modeling responsibility between \emph{environment-level
invariance} and \emph{sample-level robustness}, rather than strengthening either mechanism in
isolation. Concretely, we propose to \textbf{decouple} (a) invariance across environments, which
mitigates correlation shift, from (b) robustness \emph{within} environments, which mitigates diversity
shift and tail risks, by learning \textbf{environment-conditioned sample weights} and enforcing
TV/IRM constraints on the resulting \emph{environment-conditioned, sample-weighted risks}.
Intuitively, TV-based IRM shapes the \emph{geometry} of the risk as a function of the classifier,
while sample weighting reshapes the \emph{measure} under which the risk is computed; conditioning this
combination on environment prevents within-environment tails from being averaged away, while
preserving the invariance signal needed to resist spurious correlations.

\subsection{Extended related-work discussion }
\label{app:related_extended}

\subsubsection{Positioning within the OOD generalization taxonomy}
\label{app:rw_positioning}

Within this taxonomy, our method is primarily situated in \textbf{Supervised Model Learning for OOD
Generalization}, as its core goal remains to learn predictors that are \emph{invariant across
environments}---specifically within the \textbf{IRM/invariant learning} family that enforces
invariance through a \emph{stationarity probe} (gradient-based constraints on a dummy classifier
parameter).

At the same time, our approach admits a natural \emph{cross-category interpretation} under
\textbf{Optimization for OOD Generalization}: we introduce an environment-conditioned, sample-level
adversarial reweighting mechanism that tilts the empirical measure toward hard/rare (tail) samples
while explicitly controlling weight concentration through an environment-wise KL regularizer.

This \emph{dual positioning} is essential for the mixed-shift regime: unlike methods that rely solely
on ERM-style supervised learning or solely on worst-case risk reweighting, we couple environment-level
invariance with within-environment tail robustness in a single minimax framework
\citep{beery2018terra,arjovsky2019irm,ahuja2020irmgames,zhang2023nicoDG,lai2024tvirm,wang2025oodtvirm}.

\subsubsection{Distributionally robust optimization and group robustness}
\label{app:rw_dro_group}

From an optimization perspective, a natural approach to OOD generalization is
\textbf{distributionally robust optimization} (DRO), which seeks to minimize the worst-case risk over
a set of plausible test distributions \citep{duchi2021uniformdro,rahimian2019droreview}. In supervised
settings, this idea is often instantiated as \emph{group DRO}, where the objective minimizes the
maximum loss over pre-defined groups or environments, thereby directly targeting worst-group
performance rather than average risk \citep{sagawa2020groupdro}.
This formulation has proven effective in scenarios where group labels encode prior knowledge of
spurious correlations, but it also exposes fundamental challenges in modern over-parameterized models.
In particular, when a model can drive the empirical training loss to (near) zero, the worst-case group
loss and the average training loss become simultaneously minimized, rendering naive group DRO
ineffective.
Recent work demonstrates that coupling group DRO with \emph{strong regularization}, such as heavier
$\ell_2$ penalties or early stopping, is crucial for recovering meaningful worst-group improvements,
highlighting that robustness in the DRO sense does not emerge automatically in high-capacity regimes.
These observations clarify that DRO-style objectives primarily reshape the \emph{optimization
criterion}---from mean risk to worst-case risk---but require additional mechanisms to prevent
degeneracy and overfitting.

\subsubsection{Sample reweighting and stable learning}
\label{app:rw_stable_reweight}

A complementary line of work approaches OOD robustness through \textbf{sample reweighting}, often
motivated by the goal of learning \emph{stable} predictors under distribution shifts. Stable learning
methods posit that performance degradation under shift is largely driven by statistical dependence
between relevant (stable) features and irrelevant (spurious) features
\citep{pengcui2020stablereweightSRDO,kuang2020stableprediction,xu2021whystablelearning,pengcui2021deepstable,pengcui2025yuhanSampleweight}.
When models are misspecified---which is unavoidable in practice---collinearity among covariates can
amplify small estimation errors and lead to arbitrarily unstable predictions across environments.
To mitigate this effect without explicit supervision on which features are spurious, a conservative
strategy is to reshape the empirical distribution so as to decorrelate input variables, thereby
improving the conditioning of the learning problem.

Recent theoretical and empirical results show that appropriately chosen sample weights can achieve
such decorrelation and yield stable predictions, particularly for linear or approximately linear
models. This intuition also underlies a number of practical reweighting-based methods.
More broadly, reweighting can be viewed as a mechanism for altering \emph{which samples dominate the
gradient signal}, thereby implicitly shaping the learned representation.
Despite their empirical success, most reweighting and stable learning approaches are either heuristic
or two-stage, and typically operate solely on the \emph{main training risk}
\citep{liu2021jtt,nam2020LFF}.
They are rarely designed to interact with explicit invariance constraints, nor do they account for
environment-specific structure when such structure is available or can be inferred.

Most existing sample reweighting and stable prediction approaches primarily act on the (reweighted)
main risk, and do not explicitly couple the learned weights with IRM/TV-style stationarity constraints
that enforce environment-level invariance. In addition, reweighting can be statistically fragile in
high-capacity models: overly concentrated weights cause a few extreme samples to dominate gradients,
inflating variance and destabilizing optimization \citep{xu2021whystablelearning,zhouPengcui2022maple}.
As a result, under mixed diversity and correlation shifts, robustness to hard/rare samples and
invariance across environments are often addressed by separate mechanisms rather than being coordinated
at the level of per-environment, sample-weighted risks. This gap motivates our approach, which
integrates environment-conditioned tail reweighting into both the risk and the TV-IRM stationarity
penalty, while stabilizing the adversary via an environment-wise KL regularizer.

\subsubsection{Invariant learning with (and without) environment partitions}
\label{app:rw_invariant_partitions}

Most invariant learning methods assume that training data are partitioned into multiple environments,
across which spurious correlations vary while invariant relationships remain stable.
Under sufficient environment diversity, such methods can provably recover invariant features
\citep{arjovsky2019irm,ahuja2020irmgames,chang2020IRMrationalization,lai2024tvirm}. Recent TV-based
interpretations further view IRM-style gradient penalties as enforcing \emph{stationarity} of the risk
with respect to a dummy classifier parameter, offering a variational perspective on invariance
\citep{lai2024tvirm,wang2025oodtvirm}.

However, two practical obstacles limit invariant learning in real-world settings. First, explicit
environment annotations are often unavailable or expensive to obtain, and recent theory shows that
learning invariant representations without environment information is fundamentally impossible without
additional inductive biases or auxiliary information \citep{lin2022zin}. Second, even when
environments are available or inferred, many invariant learning objectives effectively operate on
environment-averaged risks, implicitly assuming within-environment homogeneity; as a result, minority
modes and tail samples that dominate OOD errors under diversity shift can be systematically diluted
\citep{ye2022oodbench,zhang2023missingirm}.

To address missing or imperfect environment partitions, recent work seeks to infer latent environments
that are informative for invariant learning, including two-stage approaches that infer partitions from
biased ERM models and then train invariant predictors, as well as interactive formulations that jointly
learn latent heterogeneity and invariant relationships
\citep{creager2021eiil,liu2021hrm,lin2022zin,tan2023TIVAtiva}. In our setting without environment
labels, we adopt a ZIN-style inference strategy to obtain \emph{soft} environment assignments and do
not claim novelty in the inference component itself \citep{lin2022zin,wang2025oodtvirm}. Our
contribution is to integrate these inferred environments into our environment-conditioned tail
reweighting TV-IRM objective: both the supervised risk and the TV-based stationarity penalty are
computed \emph{per inferred environment}, while an environment-wise KL regularizer stabilizes the tail
reweighting adversary, yielding a single framework that covers both known- and inferred-environment
settings with consistent training semantics.

\section{Additional Preliminaries}
\label{app:prelim_more}

\subsection{IRM and IRMv1 background (for completeness)}
\label{app:irm_background}

Given the per-environment empirical risk $\widehat R_e(\cdot)$ defined in Eq.~\eqref{eq:emp-risk},
environment-mean ERM can be written as
\begin{equation}
\min_{\Phi,\,h}\ \mathbb{E}_{e}\big[\widehat R_e(h\circ\Phi)\big].
\label{eq:app_erm_env_mean}
\end{equation}
Invariant Risk Minimization (IRM) seeks representations $\Phi$ such that there exists a classifier
$h$ that is simultaneously optimal across environments:
\begin{equation}
\min_{\Phi}\ \mathbb{E}_e\big[\widehat R_e(h\circ\Phi)\big]
\quad\text{s.t.}\quad
h\in \arg\min_{\bar h}\ \widehat R_e(\bar h\circ\Phi),\ \forall e\in\mathcal{E}_{\mathrm{tr}}.
\label{eq:app_irm}
\end{equation}
A common single-level surrogate (IRMv1) replaces the constraints with a stationarity penalty on a
dummy scalar classifier $w$ (evaluated at $w=1$):
\begin{equation}
\min_{\Phi}\ \mathbb{E}_e\Big[\widehat R_e(w\circ\Phi)\ +\ \gamma\,
\big\|\nabla_w \widehat R_e(w\circ\Phi)\big|_{w=1}\big\|_2^2\Big].
\label{eq:app_irmv1}
\end{equation}

\subsection{TV penalties: the TV-$\ell_2$ variant}
\label{app:tv_l2}

The main text focuses on TV-$\ell_1$ (Eq.~\eqref{eq:tv-l1}). Another common choice is the
TV-$\ell_2$ stationarity penalty:
\begin{equation}
P_{\mathrm{TV}\text{-}\ell_2}(\Phi)
\triangleq
\mathbb{E}_e\Big[\big\|\nabla_w \widehat R_e(w\circ\Phi)\big|_{w=1}\big\|_2^2\Big].
\label{eq:app_tv_l2}
\end{equation}
Analogously, in ECTR one may define $P_{\mathrm{TV}\text{-}\ell_2}(\Phi,\theta)$ by replacing
$\widehat R_e$ with the environment-conditioned tail risk $R_{e,\theta}$.

\subsection{Background on DRO and group DRO}
\label{app:dro_background}

DRO seeks robustness by minimizing the worst-case risk over an ambiguity set $\mathcal{Q}$:
\begin{equation}
\min_{\theta}\ \sup_{Q\in\mathcal{Q}}\ \mathbb{E}_{(x,y)\sim Q}\big[\ell(f_\theta(x),y)\big].
\label{eq:app_dro}
\end{equation}
A common supervised instantiation is \emph{group DRO}:
\begin{equation}
\min_{\theta}\ \max_{e\in\mathcal{E}_{\mathrm{tr}}}\ \widehat R_e(f_\theta).
\label{eq:app_group_dro}
\end{equation}

\subsection{Latent environment inference (ZIN-style soft partition)}
\label{app:zin_inference}

When environment labels are unavailable, one can infer soft assignments via an inference network that
outputs $q_\eta(e\mid i)$ (optionally conditioned on auxiliary information $z_i$ when available),
yielding $m_{i,e}=q_\eta(e\mid i)$. These inferred assignments allow computing per-inferred-environment
risks and stationarity penalties using the same definitions as in the known-environment case.

\section{Additional Details for ECTR}
\label{app:method_more}

\subsection{Setup and notation (explicit environment assignment rule)}
\label{app:setup_notation}

Let a mini-batch be $\mathcal{B}=\{(x_i,y_i)\}_{i=1}^N$ and let $E$ denote the number of environments.
We use $m_{i,e}\in[0,1]$ to denote environment assignment mass of example $i$ to environment $e$:
\begin{equation}
m_{i,e}=
\begin{cases}
\mathbb{I}[g_i=e], & \textbf{if environments are observed,} \\
q_\eta(e\mid i), & \textbf{if environments are inferred.}
\end{cases}
\label{eq:app_env_assignment}
\end{equation}
All environment-conditioned quantities are computed within each mini-batch.

\subsection{Why environment-wise (not global) KL?}
\label{app:why_envwise_kl}

A global entropy/KL constraint cannot prevent collapse \emph{within a specific environment} while
still keeping global entropy moderate due to mass spread across environments. The environment-wise KL
directly controls concentration per environment, aligning with the per-environment TV/IRM stationarity
penalty.

\subsection{Why environment-conditioning is essential (expanded discussion)}
\label{app:why_env_conditioning}

\paragraph{Balanced environment contribution.}
Because $\pi_\theta(\cdot\mid e)$ is normalized per environment and we average over environments, each
environment contributes equally to $R_{\mathrm{main}}$ and to the TV penalty. This preserves the
semantics of IRM/TV constraints: invariance is enforced across environments, rather than being
dominated by whichever environment receives larger global weight mass.

\paragraph{Decoupling environment shift from within-environment heterogeneity.}
Tail/hard samples are often environment-dependent (e.g., certain environments are inherently noisier).
A global tail objective may unintentionally act as an implicit environment reweighting mechanism.
Environment-conditioning confines tail emphasis to within-environment heterogeneity, allowing IRM/TV
to focus on environment-to-environment invariance without interference.

\subsection{Practical optimization notes}
\label{app:practical_notes}

\noindent{\bfseries Practical notes.}
(i) \textbf{Mass stability:} add $\varepsilon$ to $\mathrm{mass}_e$ to prevent division by zero. \\
(ii) \textbf{Nonnegativity of $\lambda$:} parameterize $\lambda$ with softplus/exp if needed. \\
(iii) \textbf{Stop-gradient for KL under inferred environments:} detach $m_{i,e}$ when forming
$\mathrm{Unif}_e$ and $\pi_\theta(\cdot\mid e)$ inside the KL term. \\
(iv) \textbf{Player update schedule:} use one or multiple ascent steps for $(\theta,\Psi,\eta)$ per
descent step of $\Phi$.

\subsection{Computational complexity (expanded)}
\label{app:complexity}

Per iteration, we compute $R_{e,\theta}$ and $\nabla_w R_{e,\theta}$ for $e=1,\dots,E$.
Since $w$ is a scalar probe and $R_{e,\theta}$ is obtained by masked/soft-assigned weighted sums, the
dominant cost is similar to one forward/backward pass of $\Phi$ plus $O(E)$ reductions over the batch.
In typical settings, this overhead is minor relative to the model forward/backward cost.




\end{document}